\documentclass{article}
\usepackage[preprint]{neurips_2023}
\makeatletter
\renewcommand{\@noticestring}{Preprint.}
\makeatother

\usepackage[utf8]{inputenc}
\usepackage[T1]{fontenc}
\usepackage{hyperref}
\usepackage{url}
\usepackage{booktabs}
\usepackage{amsmath,amssymb}
\usepackage{graphicx}
\usepackage{multirow}
\usepackage{tikz}
\usepackage{enumitem}
\usetikzlibrary{arrows.meta,positioning}

\def\ba{{\mathbf a}}
\def\bx{{\mathbf x}}
\def\bz{{\mathbf z}}
\newcommand{\E}{\mathbb{E}}
\newcommand{\Prob}{\mathbb{P}}
\newtheorem{proposition}{Proposition}

\newenvironment{proof}[1][Proof]{\noindent{\it #1.}}{\hfill$\square$\medskip}

\title{A Patient World Model for Early Forecasting of\\
Digital Health Campaign Outcomes: Capabilities and Limits}

\author{%
  Yunlong Wang\\
  Advanced Analytics, IQVIA\\
  Wayne, PA, USA\\
}

\begin{document}

\maketitle

% version_3: NeurIPS-style tightening (WMHS @ NeurIPS 2026 target).
% Changes vs version_2, per the agreed 9-point plan:
%  1. intro cut to 4 paragraphs + compact contributions (architecture first)
%  2. related work cut to 3 paragraphs (world models / WM for health /
%     treatment response); campaign measurement folded into intro,
%     survival analysis folded into the method section
%  3. problem formulation merged into the method section
%  4. architecture framed as the contribution; f_theta = swappable module,
%     GRU = this paper's instantiation; new Fig. 1
%  5. analysis explicitly about the formulation, not the sequence module
%  6. data description compressed; baselines + metrics merged
%  7. results 6.1 compressed
%  8. guardrail section rewritten as a finding (user reviews at the end)
%  9. conclusion short and positive
% Style: short sentences. No semicolons or em-dashes in prose.
% Slots preserved: % ==== EXP4 SLOT ==== and % ==== EXP5 SLOT ====

\begin{abstract}
Digital direct-to-consumer (DTC) health campaigns are usually measured
after the fact. In-flight forecasting commonly relies on a separate
classifier for every cutoff and horizon. We treat this task as a
dynamic-system problem and build a compact patient world model. The
architecture maintains a latent state per patient, learns
exposure-conditioned state dynamics jointly with a weekly conversion
hazard, and rolls forward into future conversion curves.
We evaluate it on a US campaign dataset with 147{,}173 patients and
5.2 million at-risk person-weeks. In a retrospective evaluation
conditioned on recorded future exposures, the model forecasts the
remaining new-to-brand prescription volume through week 52 with a
relative error of 2.9\% from a week-4 cutoff and 0.8--2.6\% from
cutoffs at weeks 8--26. The strongest non-recurrent baseline, a
pooled-hazard gradient boosting model given the same survival rollout
and information, has relative errors of 13.6--33.1\%.
Per-horizon classifiers perform substantially worse.
A Fisher-information analysis motivates dense next-exposure supervision
when conversions are rare. Removing this auxiliary objective increases
prescription-volume error by approximately $2$--$14\times$, while
providing no consistent disadvantage on the more common specialist-visit
outcome.
We also evaluate scenario simulation. Switching all future exposure off
raises predicted conversion from 0.31 to 0.89, a pattern consistent
with selection effects in observational exposure data.
This result highlights the limits of interpreting exposure-conditioned
rollouts causally.
\end{abstract}

\section{Introduction}
\label{sec:intro}
Digital direct-to-consumer (DTC) campaigns for prescription drugs produce
patient-level exposure records. A patient may see campaign ads repeatedly
across channels and ad formats over many weeks. The outcome of interest,
such as a specialist visit or a new prescription, may occur months later.
Measurement today is mostly retrospective. Attribution and media-mix
models \citep{shao2011,zhang2014,jin2017} are fit after outcomes have
matured. Campaign teams cannot wait that long. Budgets, channel mix, and
creative rotation must be adjusted mid-flight, using partial data.
The common in-flight practice is to select a cutoff week $\tau$ and a
horizon $H$, train a classifier for that pair, and repeat for every pair
the business needs. This practice has two problems. First, nothing
enforces consistency across models trained for different horizons, so
the assembled cumulative conversion curve can decrease, which is
impossible for a true cumulative curve. Second, none of these classifiers
can predict what would happen under a different future exposure plan.
They contain no exposure-conditioned dynamics to simulate.
World models offer the missing structure. A world model jointly learns
how a latent state evolves under actions and predicts the associated
outcomes, then rolls forward to simulate future trajectories
\citep{sutton1991,ha2018}. This framework now spans control
\citep{hafner2020,hafner2025}, interactive environments \citep{bruce2024},
video simulation \citep{brooks2024,assran2025}, and driving
\citep{hu2023}. Recent position papers identify world models as the next
frontier of machine intelligence \citep{lecun2022,li2025}.
Patient world models have also begun to emerge, including Delphi-2M
\citep{shmatko2025} and ETHOS \citep{renc2024}. These models generate
plausible future health trajectories conditioned on patient history.
What they lack is an explicit representation of the intervention.
Without one, a trajectory generator cannot answer the question a
decision maker actually asks: what happens under this plan?
Our setting supplies the missing piece. Exposures are recorded at
patient-week granularity, and outcomes are dated events, allowing
forecasts to be evaluated against held-out observations at scale.

This paper develops and evaluates an intervention-conditioned patient
world model. Our primary endpoint is new-to-brand prescriptions (NBRx),
a key business outcome. This outcome is rare, making early forecasting
particularly valuable and difficult. In a retrospective evaluation
conditioned on recorded future exposures, the model forecasts the
remaining NBRx volume through week 52 with a relative error of 2.9\%
from a week-4 cutoff. Common practice produces errors exceeding
2,500\% on the same task. Even a strong pooled-hazard gradient boosting
baseline with the same survival rollout has relative errors of
13.6--33.1\% across cutoffs at weeks 4--26, compared with 0.8--2.9\%
for our model. Exposure is targeted rather than randomized, so every
forecast in this paper is exposure-conditioned rather than causal.
One of our experiments demonstrates the severe prediction reversal
that can arise when these scenario rollouts are interpreted causally.

Our contributions are as follows.
\begin{enumerate}[leftmargin=1.5em]
\item \textbf{An architecture.}
An intervention-conditioned patient world model comprising a swappable
sequence-state module (here a GRU), a survival hazard head, and an
auxiliary next-exposure prediction head. The components are trained
jointly, and recursive survival rollout produces coherent conversion
curves (Section~\ref{sec:model}).

\item \textbf{An analysis of where the gains arise.}
We establish coherence and an additive bound on cumulative-incidence
error in terms of weekly hazard errors, derive a concentration bound
for aggregate volume under explicit assumptions, and use a
Fisher-information argument to explain the potential value of dense
auxiliary supervision in the rare-event regime
(Section~\ref{sec:analysis}).

\item \textbf{Accurate early-cutoff forecasts of a rare outcome.}
NBRx volume forecasts have relative errors of 0.8--2.9\% across the
evaluated cutoffs at weeks 4--26, compared with 13.6--33.1\% for the
pooled-hazard gradient boosting baseline. Removing the auxiliary
objective increases volume error by approximately $2$--$14\times$
on this outcome, with no consistent loss on the more common
specialist-visit outcome, supporting the predicted regime dependence
(Section~\ref{sec:results}).

\item \textbf{A quantified failure of scenario simulation.}
Turning all future exposure off raises predicted conversion from
0.31 to 0.89 in our model, with a similar reversal in the baseline
simulator. We examine this artifact in relation to exposure selection
and show why these observational rollouts cannot be read as causal
responses (Section~\ref{ssec:guardrail}).
\end{enumerate}

\section{Related work}
\label{sec:related}

\paragraph{World models.} Learning environment dynamics together with the
outcome signal, then rolling the model forward, is the core pattern of
model-based reinforcement learning \citep{sutton1991,ha2018,hafner2020,
hafner2025}. The same recipe has scaled to generative interactive
environments \citep{bruce2024}, video models used as general-purpose
simulators \citep{brooks2024,assran2025}, and driving \citep{hu2023}. We
borrow the structure: latent state, action-conditioned transition, joint
training, and rollout. We do not borrow the control objective. No policy
is optimized, and our actions are observed exposures. The structure buys
us measurement: coherent forecasts at every horizon, plus scenario
simulation under stated assumptions.

\paragraph{World models for health.} Generative models of patient
trajectories are emerging fast. Delphi-2M models the natural history of
over 1{,}000 diseases from biobank-scale records \citep{shmatko2025}.
ETHOS predicts tokenized patient timelines zero-shot \citep{renc2024}.
MOTOR pretrains a time-to-event foundation model on structured medical
records \citep{steinberg2024}, and digital-twin models generate synthetic
control arms for trials \citep{fisher2019}. Surveys note that most EHR
foundation models are still evaluated far from decisions
\citep{wornow2023}. All of these model the outcome process. The
intervention process is absent or folded implicitly into the history. Our
model is far smaller, but its action stream is explicit and weekly. That
is what makes intervention-conditioned rollout and scenario simulation
well-posed tasks. We view the two lines as complementary. Trajectory
foundation models supply rich patient states. An explicit
intervention-conditioned transition turns a trajectory generator into a
decision tool.

\paragraph{Treatment-response forecasting over time.} Recurrent marginal
structural networks \citep{lim2018} and counterfactual recurrent networks
\citep{bica2020} estimate outcomes under hypothetical treatment
sequences. They inherit the assumptions of marginal structural models,
such as sequential ignorability \citep{robins2000}. We claim less. Our
model produces exposure-conditioned forecasts of observed outcomes, and
scenario rollouts are simulations under stated assumptions.
Section~\ref{ssec:guardrail} shows concretely what goes wrong when this
distinction is ignored.

\section{An intervention-conditioned patient world model}
\label{sec:model}

\subsection{Setup and tasks}
\label{ssec:problem}

Consider a campaign observed week by week. For patient $i$, let $r \in
\{0, 1, \ldots, R\}$ index the week relative to enrollment, with $R = 52$
in our data. During week $r$ the patient receives digital exposure
summarized by an \emph{action} vector $\ba_{i,r} \in \mathbb{R}^{d_a}$.
It collects log-transformed impression counts by channel, ad type, and
targeting tactic, plus recency information. A static vector $\bx_i \in
\mathbb{R}^{d_s}$ holds eligibility flags, source of business, and
baseline activity. The word action means recorded observational exposure.
Delivery is targeted, not randomized. Every quantity in this paper is
therefore an exposure-conditioned forecast, not a causal effect.
Section~\ref{ssec:guardrail} makes this concrete.

Let $T_i$ be the week of the conversion event and let $C_i$ be the
censoring week, set by the end of the observation window. Following the
discrete-time survival convention \citep{cox1972,tutz2016}, define the
per-week hazard
\begin{equation}
h_{i,r+1} \;=\; \Prob\!\left(T_i = r{+}1 \,\middle|\, T_i > r,\;
\bx_i,\, \ba_{i,1:r}\right),
\label{eq:hazard}
\end{equation}
the probability that a patient at risk through week $r$ converts in week
$r{+}1$. A patient contributes one at-risk observation for every week $r <
\min(T_i, C_i)$. The cumulative incidence through week $k$ is
\begin{equation}
F_i(k) \;=\; 1 - \prod_{m=1}^{k}\left(1 - h_{i,m}\right),
\label{eq:cuminc}
\end{equation}
which never decreases in $k$ as long as the hazards are valid
probabilities. Neural survival models adopt the same convention
\citep{katzman2018,lee2018,giunchiglia2018}, but they treat the covariate
process as given. That choice supports risk scoring and rules out
action-conditioned simulation.

Three tasks are studied. \textbf{(T1) One-step hazard prediction.} Given
the history through week $r$, predict $h_{i,r+1}$. This is a foundation
check. A model that cannot learn the one-step signal has nothing to roll
out. \textbf{(T2) Early trajectory forecasting.} Given only the history
through a cutoff $\tau < R$, forecast the future curve $F_i(k)$ for every
patient still at risk, then aggregate to the campaign volume forecast
\begin{equation}
\widehat{N}(k) \;=\; \sum_{i \in \mathcal{R}_\tau} w_i\,
\frac{F_i(k) - F_i(\tau)}{1 - F_i(\tau)},
\label{eq:volume}
\end{equation}
Here $\mathcal{R}_\tau$ is the at-risk set at $\tau$. The weights $w_i$
are inverse-sampling weights \citep{horvitz1952} that map the modeling
cohort back to natural prevalence. \textbf{(T3) Scenario simulation.} From
cutoff $\tau$, replace the future exposures with a hypothetical plan
$\ba'_{i,\tau+1:R}$ and recompute (\ref{eq:cuminc}). The stated
assumption is that the learned dynamics hold on the support of the
observed data. Task (T2) is where per-horizon practice breaks down, since
its per-cell probabilities share no state and need not be monotone. Task
(T3) additionally requires an explicit exposure-conditioned transition
model, which neither per-horizon classifiers nor outcome-only sequence
models have.

\subsection{Architecture}
\label{ssec:arch}

\begin{figure}[t]
\centering
\resizebox{0.92\textwidth}{!}{%
\begin{tikzpicture}[>={Stealth[length=1.6mm]}, font=\scriptsize,
  cell/.style={draw, rounded corners=1.2pt, minimum width=7.2mm,
               minimum height=4.6mm, inner sep=1pt, fill=white},
  headR/.style={draw, rounded corners=1.2pt, fill=gray!15, inner sep=1.6pt},
  headT/.style={draw, rounded corners=1.2pt, fill=gray!15, inner sep=1.6pt},
  var/.style={inner sep=1.6pt},
  note/.style={font=\tiny, text=black!60}]
% ---- module band (drawn first, sits behind the state cells) ----
\fill[blue!6, rounded corners=3pt] (0.88,-0.40) rectangle (8.62,0.40);
\draw[blue!35, dashed, rounded corners=3pt] (0.88,-0.40) rectangle (8.62,0.40);
% encoder cells (observed prefix)
\node[var]  (x)  at (-0.10,0) {$\bx_i$};
\node[cell] (g1) at (1.30,0) {$f_\theta$};
\node[cell] (g2) at (2.70,0) {$f_\theta$};
\node[var]  (d1) at (3.70,0) {$\cdots$};
\node[cell] (g3) at (4.70,0) {$f_\theta$};
% rollout cells (dashed)
\node[cell, dashed] (g4) at (6.30,0) {$f_\theta$};
\node[var]  (d2) at (7.25,0) {$\cdots$};
\node[cell, dashed] (g5) at (8.10,0) {$f_\theta$};
% actions
\node[var] (a1) at (1.30,-0.90) {$\ba_{i,1}$};
\node[var] (a2) at (2.70,-0.90) {$\ba_{i,2}$};
\node[var] (a3) at (4.70,-0.90) {$\ba_{i,\tau}$};
\node[var] (a4) at (6.30,-0.90) {$\ba'_{i,\tau+1}$};
\node[var] (a5) at (8.10,-0.90) {$\ba'_{i,k-1}$};
\draw[->] (a1) -- (g1); \draw[->] (a2) -- (g2); \draw[->] (a3) -- (g3);
\draw[->] (a4) -- (g4); \draw[->] (a5) -- (g5);
% recurrence
\draw[->] (x)  -- node[above]{$\bz_{i,0}$} (g1);
\draw[->] (g1) -- node[above]{$\bz_{i,1}$} (g2);
\draw[->] (g2) -- (d1); \draw[->] (d1) -- (g3);
\draw[->] (g3) -- node[above, pos=0.25]{$\bz_{i,\tau}$} (g4);
\draw[->] (g4) -- (d2); \draw[->] (d2) -- (g5);
% module annotation (top left) + pointer to the band
\node[note, anchor=west, align=left] at (-0.15,2.30)
  {$f_\theta$: sequence-state module};
\node[note, anchor=west, align=left] at (-0.15,2.02)
  {(this work: a GRU, swappable)};
\draw[->, black!45, line width=0.5pt] (0.45,1.86) to[bend right=18] (0.95,0.48);
% transition head; eq (6) takes (z_r, a_r), hence the curved input
\node[headT] (t2) at (2.70,0.95) {$g_{\mathcal T}$};
\draw[->] (g2) -- (t2);
\draw[->] (a2.west) to[out=170, in=200, looseness=1.6] (t2.west);
\node[var] (ahat) at (2.70,1.72) {$\hat\ba_{i,3}$};
\draw[->] (t2) -- (ahat);
\node[note, anchor=east] at (1.95,1.35) {transition head};
% hazard heads
\node[headR] (r3) at (4.70,0.95) {$g_{\mathcal R}$};
\node[headR] (r4) at (6.30,0.95) {$g_{\mathcal R}$};
\node[headR] (r5) at (8.10,0.95) {$g_{\mathcal R}$};
\draw[->] (g3) -- (r3); \draw[->] (g4) -- (r4); \draw[->] (g5) -- (r5);
\node[note, anchor=west] at (8.48,0.95) {hazard heads};
\node[var] (h3) at (4.70,1.72) {$h_{i,\tau+1}$};
\node[var] (h4) at (6.30,1.72) {$h_{i,\tau+2}$};
\node[var] (h5) at (8.10,1.72) {$h_{i,k}$};
\draw[->] (r3) -- (h3); \draw[->] (r4) -- (h4); \draw[->] (r5) -- (h5);
% product into cumulative incidence
\node[var, align=center] (F) at (6.55,2.45)
  {$F_i(k\,|\,\tau)=1-\prod_{m}(1-h_{i,m})$};
\draw[->] (h3.north) to[bend left=12] (F.west);
\draw[->] (h4.north) -- (F.south);
\draw[->] (h5.north) to[bend right=10] (F.south east);
\node[note] at (6.55,2.82) {coherent survival rollout};
% cutoff line
\draw[dashed, gray] (5.50,-1.20) -- (5.50,1.30);
\node[gray, font=\tiny, anchor=north] at (5.50,-1.28) {cutoff $\tau$};
\end{tikzpicture}}
\caption{The proposed architecture, unrolled. It is defined by three
components and one inference pattern: a swappable sequence-state module
$f_\theta$ (instantiated as a GRU in this work), a hazard head
$g_{\mathcal R}$, a transition head $g_{\mathcal T}$ for dense next-action
supervision, and the recursive survival rollout that multiplies weekly
hazards into a coherent cumulative-incidence forecast. The transition
head takes $(\bz_{i,r}, \ba_{i,r})$, shown by the curved input. During
training both heads apply at every at-risk week. The figure shows
$g_{\mathcal T}$ once on the observed prefix and $g_{\mathcal R}$ from the
cutoff onward, where forecasting happens. After the cutoff (dashed cells),
the same module runs on recorded or hypothetical exposures $\ba'$.}
\label{fig:arch}
\end{figure}

The architecture is defined by three components and one inference pattern
(Fig.~\ref{fig:arch}): a sequence-state module $f_\theta$, a hazard head
$g_{\mathcal R}$, a transition head $g_{\mathcal T}$, and a recursive
survival rollout. The sequence module is deliberately generic. We
instantiate it as a two-layer GRU \citep{cho2014} with hidden size 128 and
about $2 \times 10^5$ parameters, which trains on CPU inside the certified
environments where patient data must stay
(Appendix~\ref{app:training}). Any sequence model can take its place, and
a stronger module may improve the results. The GRU is not the
contribution. In fact, a plain GRU forecaster is one of our baselines. It
is obtained by deleting one component, and Section~\ref{sec:results}
measures exactly what that deletion costs.

The static vector sets the initial state through a learned projection,
and the module advances the state with each week's action,
\begin{equation}
\bz_{i,0} \;=\; \tanh\!\left(W_0\, \bx_i + b_0\right),
\qquad
\bz_{i,r} \;=\; f_\theta\!\left(\bz_{i,r-1},\, \ba_{i,r}\right),
\label{eq:encoder}
\end{equation}
so that $\bz_{i,r}$ summarizes what the campaign has done to patient $i$
in the first $r$ weeks. The hazard head is a small two-layer network that
outputs a hazard at every at-risk week,
\begin{equation}
h_{i,r+1} \;=\; \sigma\!\left(g_{\mathcal{R}}(\bz_{i,r})\right).
\label{eq:outhead}
\end{equation}
The transition head predicts the next week's action summary from the
current state and action,
\begin{equation}
\hat{\ba}_{i,r+1} \;=\; g_{\mathcal{T}}\!\left(\bz_{i,r},\, \ba_{i,r}\right).
\label{eq:transhead}
\end{equation}
The target of (\ref{eq:transhead}) is the observable exposure summary
itself. Transition quality can therefore be checked directly on held-out
data rather than taken on faith.

\subsection{Training objective}
\label{ssec:loss}

Let $\mathcal{A}$ be the set of at-risk patient-weeks $(i,r)$ in the
training split, and let $y_{i,r+1} \in \{0,1\}$ indicate a conversion in
week $r{+}1$. The model is trained end-to-end by minimizing
\begin{equation}
\mathcal{L} \;=\;
-\!\!\sum_{(i,r)\in\mathcal{A}}\!\! \Big[ y_{i,r+1} \log h_{i,r+1}
+ (1{-}y_{i,r+1}) \log (1{-}h_{i,r+1}) \Big]
\;+\; \lambda \!\!\sum_{(i,r)\in\mathcal{A}'}\!\!
\big\lVert \hat{\ba}_{i,r+1} - \ba_{i,r+1} \big\rVert_2^2,
\label{eq:loss}
\end{equation}
where $\mathcal{A}'$ keeps the patient-weeks whose next week is observed,
and $\lambda = 0.3$ in all experiments. The first term is the
discrete-time survival likelihood over pooled at-risk person-weeks
\citep{tutz2016}. The second term is the dynamics supervision. Setting
$\lambda = 0$ and dropping (\ref{eq:transhead}) recovers exactly a GRU
forecaster of the same capacity. The comparison between the two in
Section~\ref{sec:results} is therefore a one-flag ablation, not a
reimplementation.

\subsection{Rollout and forecasting}
\label{ssec:rollout}

Given a cutoff $\tau$, the module reads the observed prefix and produces
$\bz_{i,\tau}$. The model is then unrolled. At each future week $m >
\tau$, the recurrence (\ref{eq:encoder}) is advanced with the future
action $\ba_{i,m}$ and the hazard head outputs $h_{i,m}$. The forecast
cumulative incidence from the cutoff is
\begin{equation}
F_i(k \mid \tau) \;=\; 1 - \!\!\prod_{m=\tau+1}^{k}\!\! \left(1 -
h_{i,m}\right), \qquad k \in (\tau, R],
\label{eq:rollout}
\end{equation}
monotone and consistent across every horizon by construction. The future
actions can be the recorded exposures, a stated continuation rule, or a
hypothetical scenario $\ba'$. Campaign-level forecasts follow by the
aggregation in (\ref{eq:volume}). Each task exercises one component. Task
(T1) uses the state and the hazard head. Task (T2) adds the rollout. Task
(T3) adds the exposure-conditioned transition.

\section{Analysis: what the structure predicts}
\label{sec:analysis}

This section analyzes the formulation, not the sequence module. Every
statement below holds for any encoder $f_\theta$.
Proposition~\ref{prop:errprop} is a property of the survival rollout.
Proposition~\ref{prop:conc} is a property of the aggregation. The Fisher
argument is a property of the joint objective (\ref{eq:loss}). Proofs are
in Appendix~\ref{app:proofs}. Throughout, $\{h_{i,m}\}$ are the true
hazards and $\{\hat h_{i,m}\}$ are the model's estimates.

\subsection{Error propagation through the rollout}
\label{ssec:errprop}

One may worry that rollout errors compound multiplicatively with the
horizon. The product form (\ref{eq:rollout}) rules this out.

\begin{proposition}[Coherence and error propagation]
\label{prop:errprop}
Fix any estimates $\hat h_{i,m} \in [0,1]$ and let $\hat F_i(k|\tau)$ be
computed by (\ref{eq:rollout}). Then (i) $\hat F_i(k|\tau)$ is
non-decreasing in $k$ and $[0,1]$-valued for every realization, and (ii)
\begin{equation}
\bigl|\hat F_i(k \,|\, \tau) - F_i(k \,|\, \tau)\bigr|
\;\le\; \sum_{m=\tau+1}^{k} \bigl|\hat h_{i,m} - h_{i,m}\bigr|,
\qquad \forall k \in (\tau, R].
\label{eq:errbound}
\end{equation}
\end{proposition}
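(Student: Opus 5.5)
Part (i) is immediate from the product form (\ref{eq:rollout}). Write the estimated survival from the cutoff as $\hat S_i(k\mid\tau)=\prod_{m=\tau+1}^{k}(1-\hat h_{i,m})$, so that $\hat F_i(k\mid\tau)=1-\hat S_i(k\mid\tau)$. Since every factor $1-\hat h_{i,m}$ lies in $[0,1]$, the product $\hat S_i(k\mid\tau)$ also lies in $[0,1]$. The step from $k$ to $k+1$ multiplies $\hat S_i$ by a further factor in $[0,1]$, so $\hat S_i$ is non-increasing in $k$. Hence $\hat F_i(k\mid\tau)$ is $[0,1]$-valued and non-decreasing in $k$. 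This holds for every realization of the estimates, because no property of $\hat h$ beyond $\hat h_{i,m}\in[0,1]$ is used.

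For part (ii), note first that $\hat F_i-F_i=S_i-\hat S_i$. The bound therefore reduces to the classical inequality for products of numbers in $[0,1]$:
\[
\Bigl|\prod_{m=\tau+1}^{k} p_m-\prod_{m=\tau+1}^{k} q_m\Bigr|\le\sum_{m=\tau+1}^{k}|p_m-q_m|,
\qquad p_m=1-h_{i,m},\quad q_m=1-\hat h_{i,m}.
\]
The right-hand side equals $\sum_m|\hat h_{i,m}-h_{i,m}|$. I would prove this inequality with a telescoping (hybrid) decomposition. For $j=\tau,\dots,k$, define the hybrid product $P_j$ that uses the true factors $p_m$ for $m\le j$ and the estimated factors $q_m$ for $m>j$. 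Then $P_k$ is the true survival and $P_\tau$ is the estimated survival, so the difference to bound is $\sum_{j=\tau+1}^{k}(P_j-P_{j-1})$.

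Each consecutive pair of hybrids differs only in factor $j$. This gives
\[
P_j-P_{j-1}=\Bigl(\prod_{m<j}p_m\Bigr)(p_j-q_j)\Bigl(\prod_{m>j}q_m\Bigr).
\]
The two outer products lie in $[0,1]$, so $|P_j-P_{j-1}|\le|p_j-q_j|$. The triangle inequality then finishes the proof. An equivalent route is induction on $k$ using
\[
ab-cd=a(b-d)+d(a-c),
\]
with $a=S_i(k-1)$, $b=p_k$, $c=\hat S_i(k-1)$ and $d=q_k$, all of which lie in $[0,1]$.

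There is no real obstacle here. The one point needing care is to check that the true hazards also lie in $[0,1]$. They do, because they are conditional probabilities by (\ref{eq:hazard}), so the factor bounds apply to both products. It is also worth stating that the bound is additive in the horizon, not multiplicative, which is exactly what answers the compounding-error worry raised before the statement.
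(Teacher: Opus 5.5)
Your proof is correct and follows the paper's argument. Part (i) comes from every factor lying in $[0,1]$. Part (ii) uses the same telescoping decomposition of $\hat S - S$ into single-factor swaps whose outer products lie in $[0,1]$; you put the true factors before the swapped index and the estimated ones after, the mirror image of the paper's $P^{-}_m, P^{+}_m$, which makes no difference. You also check explicitly that the true hazards lie in $[0,1]$, which the paper's proof relies on without stating it.
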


The trajectory error grows at most additively with the horizon, in
increments that are differences of small numbers. Weekly hazards average
$\bar h \approx 9\times 10^{-3}$ in our data. Part (i) costs nothing. It
holds for any hazard estimates and for any model given the
pooled-hazard-plus-rollout form, boosted trees included. The per-horizon
practice has neither property. Each $(\tau, H)$ classifier estimates
$F_i(k)$ on its own, so monotonicity can fail. A plain classifier also
has no way to use censored patients. Excluding them selects converters
into the long-horizon training slices, a bias that
Section~\ref{ssec:headline} measures at full size.

\subsection{Concentration of the campaign-volume forecast}
\label{ssec:concentration}

The next question is when the aggregate forecast (\ref{eq:volume}) can be
trusted. Condition on the cutoff-time information $\mathcal{F}_\tau$. For
each patient $i \in \mathcal{R}_\tau$, let
\begin{equation*}
q_i \;=\; \frac{\hat F_i(R \,|\, \tau) - \hat F_i(\tau)}{1 - \hat
F_i(\tau)}
\end{equation*}
be the predicted probability of converting in the remaining weeks, and let
$N = \sum_{i \in \mathcal{R}_\tau} w_i\, \mathbf{1}\{\tau < T_i \le R\}$
be the realized weighted converter count.

\begin{proposition}[Volume concentration]
\label{prop:conc}
Assume (i) conversions of distinct patients are independent conditional on
$\mathcal{F}_\tau$, and (ii) the model is trajectory-calibrated on the risk
set, i.e., $\Prob(\tau < T_i \le R \,|\, \mathcal{F}_\tau) = q_i$. Then
$\E[N \,|\, \mathcal{F}_\tau] = \widehat N(R)$, and for all $t > 0$,
\begin{equation}
\Prob\Big(\big|N - \widehat N(R)\big| \ge t \,\Big|\, \mathcal{F}_\tau\Big)
\;\le\; 2\exp\!\left(-\frac{t^2/2}{V + \tfrac{1}{3} w_{\max}\, t}\right),
\label{eq:bernstein}
\end{equation}
where $V = \sum_{i \in \mathcal{R}_\tau} w_i^2\, q_i (1-q_i)$ and $w_{\max}
= \max_i w_i$.
\end{proposition}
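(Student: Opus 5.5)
The plan is to write $N$ as a sum of independent bounded centered variables and apply Bernstein's inequality. For each $i \in \mathcal{R}_\tau$ let $B_i = \mathbf{1}\{\tau < T_i \le R\}$, so that $N = \sum_i w_i B_i$. Conditional on $\mathcal{F}_\tau$, assumption (ii) makes $B_i$ a Bernoulli variable with parameter $q_i$, and assumption (i) makes the $B_i$ independent. The weights $w_i$ and the risk set $\mathcal{R}_\tau$ are $\mathcal{F}_\tau$-measurable, since they are fixed inverse-sampling weights and the at-risk set is known at the cutoff. So everything is done conditionally, treating $w_i$ and $q_i$ as constants.

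\textbf{Step 1 (mean).} By linearity, $\E[N \mid \mathcal{F}_\tau] = \sum_i w_i q_i$. Comparing with the definition (\ref{eq:volume}) at $k=R$, the summand there is exactly $w_i q_i$, provided the model's $\hat F_i$ is the quantity plugged into (\ref{eq:volume}). Hence the expectation equals $\widehat N(R)$. The one point to state explicitly is that $q_i$ is well defined, i.e.\ $\hat F_i(\tau) < 1$ on the risk set. Under the rollout (\ref{eq:rollout}) from the cutoff, $\hat F_i(\tau)=0$ and $q_i = \hat F_i(R\mid\tau)$, so this is automatic.

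\textbf{Step 2 (Bernstein).} Define $X_i = w_i(B_i - q_i)$. These are independent with mean zero, and $|X_i| \le w_i \max(q_i, 1-q_i) \le w_{\max}$. Their variances are $w_i^2 q_i(1-q_i)$, and these sum to $V$. The classical Bernstein inequality for independent centered variables bounded by $M$ gives
\[
\Prob\Big(\sum_i X_i \ge t\Big) \le \exp\!\left(-\frac{t^2/2}{V + M t/3}\right).
\]
Applying it with $M = w_{\max}$ to both $\sum_i X_i$ and $-\sum_i X_i$, and taking a union bound, yields the factor $2$ and the bound (\ref{eq:bernstein}).

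\textbf{Main obstacle.} The probabilistic part is routine. The delicate step is bookkeeping of the conditioning. I need to check that $w_i$, $\mathcal{R}_\tau$ and the $q_i$ (which depend on future actions in the rollout) are all measurable with respect to the conditioning information. If the recorded future exposures are used, the conditioning $\sigma$-field must include them, and assumption (ii) should be read as conditioning on that enlarged information. I would state this convention explicitly rather than change the argument, since Bernstein only needs conditional independence and the conditional Bernoulli laws.
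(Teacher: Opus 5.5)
Your proposal is correct and matches the paper's proof: linearity with assumption (ii) gives the mean, and Bernstein's inequality applied to the conditionally independent, zero-mean summands $w_i(\mathbf{1}\{\tau<T_i\le R\}-q_i)$, each bounded by $w_{\max}$ and with variances summing to $V$, gives the tail bound, with the factor $2$ coming from the two one-sided bounds. Your conditioning bookkeeping is more explicit than the paper's, with one caveat: in this case-control cohort the strata that define $w_i$ depend on eventual conversion status, so treating $w_i$ as $\mathcal{F}_\tau$-measurable is an implicit assumption (which the paper also makes silently) rather than something that holds automatically.
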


Proposition~\ref{prop:conc} splits the volume error into a random part and
a systematic part. The random part is bounded by (\ref{eq:bernstein}).
The systematic part is the calibration bias that breaks assumption (ii).
Three consequences are used in the experiments. Single-digit-percent
volume errors are calibration bias, not sampling noise. Per-cell errors
of hundreds of percent cannot be noise at this sample size, so they are
structural. And once two coherent models are both well calibrated, the
campaign-level aggregate stops separating them, even when patient-level
discrimination still does. The $1\sigma$ relative floor is about
$1/\sqrt{n_+}$, with $n_+$ the expected converter count. Every volume
forecast below is read against this floor.

\subsection{Dense supervision in the rare-event regime}
\label{ssec:fisher}

Finally, consider what the transition loss contributes to the shared
encoder. Fix any encoder parameter $u$ and look at the expected curvature
of the two loss terms at an at-risk week. For the hazard term, with logit
$\eta_{i,r} = g_{\mathcal{R}}(\bz_{i,r})$, the Fisher information
contribution is
\begin{equation}
\mathcal{I}^{\mathcal R}_{i,r}(u) \;=\;
h_{i,r+1}\big(1 - h_{i,r+1}\big)
\left(\frac{\partial \eta_{i,r}}{\partial u}\right)^{\!2}
\;\le\; \bar h \, \sup_{i,r} \left(\frac{\partial \eta_{i,r}}{\partial
u}\right)^{\!2}.
\label{eq:fisherR}
\end{equation}
The factor $h(1-h)$ is small because events are rare. Under a Gaussian
working model for the transition residuals with variance $\sigma^2$, the
transition term contributes
\begin{equation}
\mathcal{I}^{\mathcal T}_{i,r}(u) \;=\; \frac{\lambda}{\sigma^2}
\left\lVert \frac{\partial \hat{\ba}_{i,r+1}}{\partial u}
\right\rVert_2^2,
\label{eq:fisherT}
\end{equation}
which does not depend on $\bar h$ and appears at every observed week,
across all $d_a$ components. In our data $\bar h \approx 9 \times 10^{-3}$
for the diagnosis outcome and an order of magnitude smaller for the
prescription outcome. Per person-week, the outcome term then supplies at
most about $1\%$ of the squared-gradient scale available to the encoder,
while (\ref{eq:fisherT}) is $O(1)$. When events are rare, the next-state
loss provides most of the curvature that pins down the shared
representation.

\subsection{Three testable predictions}
\label{ssec:predictions}

\textbf{P1 (structure).} Coherence and additive error propagation belong
to the rollout, not the network. Any pooled-hazard model with the
rollout, including a plain GBM, should beat the per-$(\tau,H)$ practice
by a wide margin.

\textbf{P2 (aggregate ceiling).} Once coherent models are well
calibrated, their volume errors should cluster near the sampling floor.
Aggregates then stop separating architectures, while patient-level
discrimination still can.

\textbf{P3 (regime dependence).} The value of the learned state and of
the dense transition supervision should grow as the outcome gets rarer
and as the sequences carry more dynamics. On the common diagnosis outcome
the world model should be close to a strong pooled-hazard baseline. On
the rare prescription outcome it should pull far ahead.

The headline experiment sits exactly where P3 predicts the largest gap.
The supporting results for P1 and P2, where the world model is predicted
not to dominate and indeed does not, are in
Appendix~\ref{app:fullresults}.

\section{Experimental setup}
\label{sec:setup}

\paragraph{Data.}
\label{ssec:data}
The data come from a large US digital DTC campaign for a prescription
cardiovascular drug. The records are protected health information. They
stay on a secured analytics cluster and only aggregate metrics are
reported. From a universe of 10.9M eligible patients, a stratified
case-control cohort of 147{,}173 patients is built and split 70/15/15 by
patient hash. The pooled at-risk person-week count is 5.2M. The weekly
action vector ($d_a = 17$) holds winsorized log impression counts in
total and by channel, ad-type, and targeting buckets, an activity flag,
and the normalized relative week. The static vector ($d_s = 7$) holds
source-of-business indicators, eligibility flags, baseline visit volume,
and the enrollment week. Two outcomes are studied. The diagnosis outcome
(dx, a visit to a relevant specialist) has about 49k events in the
cohort. The prescription outcome (rx, a new-to-brand prescription) has
5{,}225 events and is roughly ten times rarer. By prediction P3, rx is
where the world model should matter most. Two design facts matter when
reading the results. The cohort is enriched to 33.6\% dx prevalence on
purpose, and every metric is reported both on the enriched cohort and
reweighted to natural prevalence with the inverse-sampling weights $w_i$.
The sequences are sparse, with a median of three active weeks per
patient. Cohort construction and full feature definitions are in
Appendices~\ref{app:cohort} and~\ref{app:features}.

\paragraph{Baselines, protocol, and metrics.}
\label{ssec:baselines}
Every model sees the same features, splits, and test patients. The
trajectory baselines are: \textbf{per-$(\tau,H)$ GBM} and
\textbf{GBM+future}, LightGBM \citep{ke2017} trained separately for each
cutoff-horizon cell, reflecting common practice, with the +future variant
also given aggregated future exposures. \textbf{Pooled-hazard GBM}, a
single discrete-time pooled-hazard LightGBM rolled out exactly like our
model. It is one coherent model for all horizons, added on purpose so
that the proposed model does not win by strawman. \textbf{GRU
forecaster}, the $\lambda = 0$ ablation. \textbf{Naive}, which
extrapolates the campaign-to-date constant hazard. The one-step task
additionally uses logistic regression, flat LightGBM, and an MLP
(Appendix~\ref{app:fullresults}). Only the proposed model is tuned, on
validation hazard likelihood. All baselines run standard library defaults
and are reported as they ran. No baseline was weakened. Campaign-level
metrics per cutoff are the relative \emph{volume error} of the final
converter count (\ref{eq:volume}) against the Kaplan--Meier
\citep{kaplan1958} count on the risk set, the \emph{trajectory
calibration} (ICI, the mean absolute gap between predicted and observed
curves), and \emph{coherence} (the fraction of monotone per-patient
curves). Patient-level metrics are AUROC, AUPRC, and calibration, both
enriched and IPW-corrected. The transition head is scored by held-out
next-state skill against mean and persistence predictors, so the claim
that it learns dynamics is checked rather than asserted.

\section{Results}
\label{sec:results}

\subsection{The headline: calling the year-end prescription volume at
week 4}
\label{ssec:headline}

\begin{table}[t]
\caption{The headline task: forecast the campaign's final NBRx volume from
an early cutoff $\tau$ (rx outcome, test split, known future exposures).
Relative error (\%) of the forecast final converter count. ``Coherent''
is the fraction of monotone per-patient curves (range across cutoffs).
Test risk sets hold 21{,}963 down to 16{,}940 patients. Observed future
converters $n_+$ = 711, 601, 489, 206, 76. At $\tau{=}39$ the $1\sigma$
sampling floor is $1/\sqrt{76} \approx 11.5\%$, so that column cannot
rank the coherent models (see text). All numbers as-run.}
\label{tab:headline}
\centering
\small
\setlength{\tabcolsep}{4.5pt}
\begin{tabular}{lrrrrrc}
\toprule
& \multicolumn{5}{c}{NBRx volume rel.\ error (\%) at cutoff $\tau$} & \\
\cmidrule(lr){2-6}
Model & $\tau{=}4$ & $\tau{=}8$ & $\tau{=}13$ & $\tau{=}26$ & $\tau{=}39$
& Coherent \\
\midrule
Per-$(\tau,H)$ GBM & 2582.0 & 3054.8 & 3754.7 & 8491.6 & 18389.9 &
19--24\% \\
Per-$(\tau,H)$ GBM + future & 2581.8 & 3054.1 & 3754.0 & 8491.9 & 18389.9
& 11--31\% \\
Naive extrapolation & 64.4 & 82.6 & 97.4 & 166.1 & 146.7 & 100\% \\
Pooled-hazard GBM & 13.6 & 17.3 & 19.2 & 33.1 & 44.3 & 100\% \\
GRU forecaster ($\lambda{=}0$) & 5.2 & 7.4 & 8.2 & 11.1 & \textbf{5.6} &
100\% \\
GRU world model (proposed) & \textbf{2.9} & \textbf{2.3} & \textbf{2.6} &
\textbf{0.8} & 23.4 & 100\% \\
\bottomrule
\end{tabular}
\end{table}

\begin{figure}[t]
\centering
\includegraphics[width=0.62\textwidth]{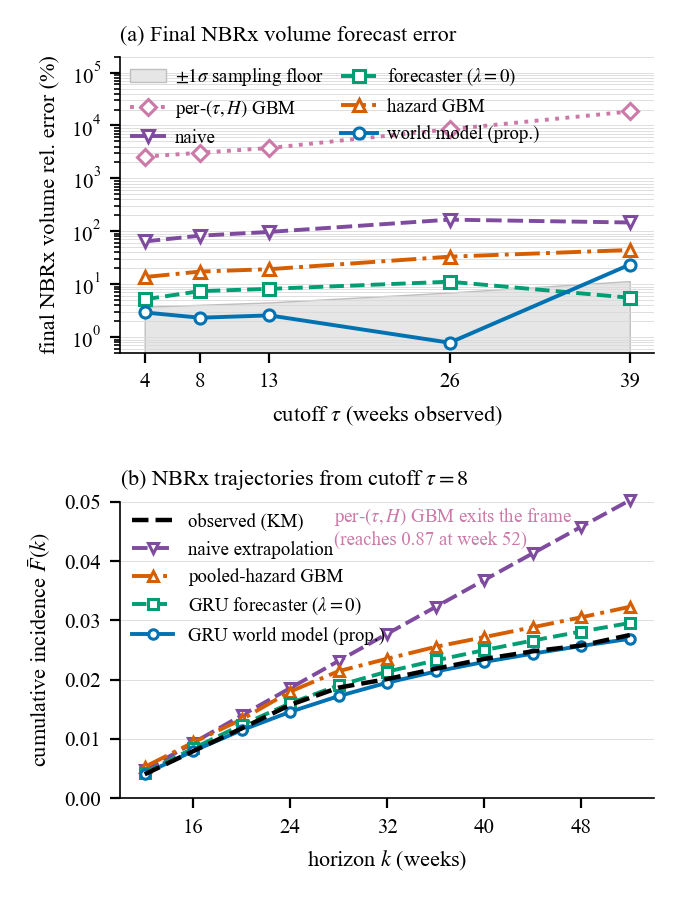}
\caption{The headline result (rx outcome, test split). (a) Relative error
of the forecast final NBRx volume versus the cutoff $\tau$ (log scale).
The shaded band is the $\pm 1\sigma$ sampling floor implied by
Proposition~\ref{prop:conc} ($\approx 1/\sqrt{n_+}$). No method, however
good, can be expected to beat it. The world model sits inside the floor
at every cutoff up to week 26. The fair coherent baseline sits
4--5$\sigma$ above it, and the per-horizon practice two to four orders of
magnitude above it. Read horizontally, a small $\tau$ means a long
rollout ($\tau = 4$ rolls 48 weeks), and the world model's error does not
grow with rollout length, as Proposition~\ref{prop:errprop} requires.
(b) Predicted NBRx cumulative-incidence trajectories from cutoff $\tau =
8$ against the observed curve, estimated by Kaplan--Meier (KM), the
standard estimator of cumulative incidence under right-censoring. The
world model tracks the observed curve across all 44 remaining weeks. The
whole curve is calibrated, not just the endpoint. The per-cell curve
leaves the frame.}
\label{fig:rx}
\end{figure}

The task: after observing the first $\tau$ weeks, forecast the number of
new-to-brand prescriptions that the at-risk audience will generate by
week 52. The evaluation feeds every model the recorded future exposures,
so the comparison isolates learned response dynamics. Forecasts are
per-patient curves (\ref{eq:rollout}) aggregated by (\ref{eq:volume}) and
scored against the Kaplan--Meier count on the same risk set.
Table~\ref{tab:headline} is the exact record. Fig.~\ref{fig:rx} plots
the same numbers against the sampling floor. At $\tau = 4$, with 711
future converters hidden in a risk set of 21{,}963 test patients, the
world model forecasts 691 conversions against 711 observed, an error of
2.9\%. From $\tau = 26$ it forecasts 208 against 206 (0.8\%). Every
curve is monotone by construction.

How close is that to the best possible? By
Proposition~\ref{prop:conc}, even a perfectly calibrated forecaster faces
a sampling floor of about $1/\sqrt{n_+}$, because the realized converter
count is itself random. On these risk sets the $1\sigma$ floor is 3.7\%,
4.1\%, 4.5\%, and 7.0\% at $\tau = 4$--$26$. The world model's errors of
2.9, 2.3, 2.6, and 0.8\% sit at or below the floor at every one of these
cutoffs. On this test set it is statistically indistinguishable from a
perfectly calibrated forecaster. There is no headroom left to measure.
The pooled-hazard GBM sits $3.6$--$4.7\sigma$ above the floor at the same
cutoffs, which is systematic bias rather than noise.

No baseline produces a usable number for this task. The per-horizon
practice errs by 2{,}582\% at week 4 and by 18{,}390\% at week 39,
forecasting about 19{,}000 conversions where 711 occur. The mechanism is
censoring. A per-cell classifier can label a non-converter at horizon $k$
only if that patient is observed through week $k$. Its long-horizon
training slices therefore keep every converter but only the
earliest-enrolled non-converters. The slices become converter-dominated,
the classifier learns a conversion rate near 0.9, and it applies that
rate to a risk set whose true rate is 0.03. The same method errs an
order of magnitude less on the ten-times-more-common dx outcome
(Appendix~\ref{app:fullresults}), which is exactly how a
selection-through-censoring bias should scale. Fewer than a third of the
assembled curves are monotone. Giving these models the aggregated future
exposures changes nothing, so the failure is structural rather than
informational. The
pooled-hazard GBM is the strong baseline built to prevent a strawman win.
It shares the survival rollout, the features, and the future exposures.
Structure alone carries it to 13.6--33.1\%, far beyond common practice,
yet an order of magnitude behind the world model at every cutoff. The
world model's trajectory ICI is also three to six times tighter, and both
GRU models add about $+0.02$ AUROC over the GBM baseline
(Appendix~\ref{app:fullresults}). Read vertically, Fig.~\ref{fig:rx}(a)
is an ablation ladder. Coupling the horizons at all is worth a factor of
40. The survival rollout is worth another 5. The learned state is worth
another 3, and the transition supervision a further $2$--$14\times$,
dissected next. Panel (b) shows that the whole curve is calibrated, so
one trained model answers every interim milestone question with the same
accuracy.

Two qualifications. First, at $\tau = 39$ only 76 future converters
remain and the floor rises to 11.5\%. The world model's 23.4\% is about
$2\sigma$ and the two GRU variants differ by less than $1.5\sigma$, so
that column cannot rank the coherent models. A real late-cutoff effect
does exist on dx, where both GRU variants underpredict by about 15\%
while the GBM stays at 6\%. It hits both values of $\lambda$ equally, so
it is not about the transition loss. Appendix~\ref{app:fullresults}
analyzes it. Second, this is a retrospective evaluation with known future
exposures, applied equally to every model.
Section~\ref{ssec:operational} discusses the step to a fully operational
forecast.

% ==== EXP5 SLOT (rarity sweep; exp5_rarity running) ==================
% When exp5 lands (experiments_results/exp5_rarity/), insert after the next
% paragraph: a figure (gap ratio vs controlled dx prevalence, WM vs
% gbm_hazard and WM vs forecaster, with the rx point overlaid at its natural
% prevalence) plus roughly this text:
%   "Prediction P3 can also be tested by experiment rather than by
%   comparing two different outcomes. Thinning the dx converters by a
%   deterministic hash makes the same outcome progressively rarer
%   (33.6% -> 10% -> 3.4% -> 1.1% cohort prevalence) while leaving the
%   exposure process untouched. Fig.~X shows the result: the volume-error
%   gap between the world model and both baselines grows monotonically as
%   events become rarer, and the rx result falls on the same curve at its
%   own prevalence. The transition skill stays flat across levels (+11.5%),
%   confirming that only the outcome signal, not the dynamics signal, was
%   manipulated."
% If exp5 contradicts the prediction, this subsection must say so and the
% mechanism claim must be weakened to the ablation evidence only.
% =====================================================================

\subsection{Why the world model wins here, and only here}
\label{ssec:mechanism}

The headline number would be suspicious if the world model won
everywhere. Strong methods usually have a regime, and the analysis
predicted this one in advance. Two checks confirm the mechanism.

First, the gap appears exactly where P3 puts it. On the diagnosis
outcome, with ten times more events on the same patients and pipeline,
the three coherent models land in the same single-digit band. The world
model reaches 4.5--7.5\% for $\tau \le 26$ and the GBM baseline
6.0--9.2\%, which is the aggregate ceiling of P2. The per-cell practice
still fails by 209--1573\%, as P1 predicts. The full dx tables, the
one-step task, and a dynamics-rich subgroup analysis are in
Appendix~\ref{app:fullresults}. The decisive advantage is confined to the
rare outcome. That is what the Fisher argument requires. With $\bar h$ an
order of magnitude smaller, the outcome likelihood alone leaves the
encoder loosely determined, so the models that share the dense
supervision should separate from those that do not.

Second, the one-flag ablation isolates the ingredient. Setting $\lambda =
0$ removes exactly one thing, the transition loss. On dx the ablation
costs nothing, and forecaster and world model are statistically
inseparable there. On rx it costs $2$--$14\times$ in volume error
(5.2/7.4/8.2/11.1\% against 2.9/2.3/2.6/0.8\% at $\tau = 4$--$26$), with
discrimination essentially tied. One flag, one order of magnitude, only
in the rare-event regime, in the direction predicted by
(\ref{eq:fisherR})--(\ref{eq:fisherT}) before the experiment ran. We read
this as supportive rather than conclusive, since it is one outcome on one
campaign and the noisy $\tau = 39$ cell goes the other way. Still, the
location and the size of the effect are hard to explain by anything
except the dense supervision.

For completeness, two dead ends are reported so others do not re-run
them. A 30k-patient probe showed a 2--5$\times$ world-model advantage on
dx that vanished at full scale, a small-sample effect. A multi-step
rollout training objective made calibration worse and was removed
(Appendix~\ref{app:training}).

\subsection{From retrospective to operational forecasting}
\label{ssec:operational}

% ==== EXP4 SLOT (closed-loop / operational rollout; exp4 running) ====
% Gate: exp4a precheck green if closed-loop rx vol err (tau=4/8 mean)
% < 13.6% (the GBM baseline's KNOWN-future error). If green and exp4b
% lands (experiments_results/exp4_operational/), REPLACE this subsection's
% second paragraph with roughly:
%   "This step has now been taken. From cutoff tau, the transition head
%   generates the future exposures itself --- imagination in the
%   world-model sense --- and the hazard head rolls out on the imagined
%   sequence, with no access to the recorded future. Table~Y: the
%   closed-loop world model forecasts the final NBRx volume within X%,
%   while every baseline must be given an assumed future plan, and the
%   best such heuristic (the train-cohort mean plan) leaves the GBM
%   baseline at Z%. Even against baselines fed the TRUE future plan
%   (Table~\ref{tab:headline}), the blind world model remains more
%   accurate. A hybrid control --- the GBM baseline rolled out on the
%   world model's imagined exposures --- recovers part of the gap,
%   locating the remaining advantage in the learned state itself."
% Add: imagination diagnostics sentence (k-step transition skill, imagined
% vs actual activity distribution), and the zero-plan row tied back to the
% alpha=0 artifact of the guardrail section.
% If exp4a is yellow/red: keep this subsection as-is (it is honest and
% complete without exp4), optionally citing the precheck diagnostics.
% =====================================================================

The evaluations above feed the recorded future exposures to every model.
This is the right controlled comparison, because it isolates response
dynamics from the separate problem of knowing the future media plan. An
operational week-4 forecast does not know that plan.

The world model is the only method in Table~\ref{tab:headline} with a
mechanism for closing this gap from within. Its transition head
(\ref{eq:transhead}) predicts next week's exposure with $+11.5\%$
held-out skill over a mean predictor, so the model can roll forward on
its own predicted exposures. Every baseline must instead be handed an
assumed future plan, and the assumption is load-bearing. The natural
choice of assuming no further exposure is exactly the off-support
scenario that Section~\ref{ssec:guardrail} shows to inflate predicted
conversion by a factor of two or more. A closed-loop evaluation of this
capability is running at the time of writing and will be reported in a
revision. The present version claims only the controlled result.

\subsection{Scenario simulation and a selection artifact}
\label{ssec:guardrail}

\begin{table}[t]
\caption{Scenario simulation (T3): simulated dose-response on the dx
outcome. Each cell is the mean predicted cumulative conversion of the
test risk set when all future exposure is scaled by $\alpha$
(\emph{do}$(\alpha \cdot \ba)$), for the world-model simulator and the
perturbed pooled-hazard GBM. The $\alpha = 0$ column is a selection
artifact, not an effect (see text). The $\alpha = 4$ column is outside
the observed dose support.}
\label{tab:exp3}
\centering
\small
\setlength{\tabcolsep}{5.5pt}
\begin{tabular}{lrrrrrrr}
\toprule
& \multicolumn{7}{c}{Dose scale $\alpha$} \\
\cmidrule(lr){2-8}
Simulator ($\tau$) & 0 & 0.25 & 0.5 & 1 & 1.5 & 2 & 4 \\
\midrule
World model ($\tau{=}4$) & \emph{0.893} & 0.312 & 0.310 & 0.310 & 0.311 &
0.311 & 0.316 \\
Pooled-hazard GBM ($\tau{=}4$) & \emph{0.632} & 0.315 & 0.314 & 0.313 &
0.318 & 0.325 & 0.364 \\
World model ($\tau{=}8$) & \emph{0.708} & 0.267 & 0.265 & 0.265 & 0.266 &
0.267 & 0.270 \\
Pooled-hazard GBM ($\tau{=}8$) & \emph{0.504} & 0.273 & 0.272 & 0.271 &
0.275 & 0.279 & 0.307 \\
\bottomrule
\end{tabular}
\end{table}

The transition model also turns the forecaster into a simulator. Replace
the future exposures with a stated scenario $\ba'$, re-roll, and read off
the predicted trajectory. We exercised this with dose scaling ($\ba' =
\alpha\,\ba$, $\alpha \in [0, 4]$), per-lever shutoffs, and a placebo
control, on dx at $\tau \in \{4, 8\}$ (protocol in
Appendix~\ref{app:exp3}). Within the observed dose support the simulator
behaves sanely (Table~\ref{tab:exp3}). The dose-response is monotone and
saturating but small, about $+0.6$ percentage points for a $4\times$
dose. The placebo moves the forecast by less than $10^{-4}$, and no
single lever moves it by more than 0.003.

The instructive result is off support. Setting $\alpha = 0$ switches all
future exposure off, and on a causal reading this should produce the
lowest conversion. It produces the highest by far. Predicted conversion
jumps from 0.31 to 0.89. The mechanism is selection through censoring.
In the observed data a patient's exposure stops once they convert, so no
future exposure is strong evidence of imminent conversion. The GBM
baseline simulator shows the same inversion at 0.63, so this is a
property of observational exposure data under outcome-dependent
censoring, not a defect of one architecture. The number is worth
remembering. It is the size of the gap between what patients who look
like this do and what would happen if we did this. Turning scenario
outputs into effect estimates requires design, such as holdouts,
randomization, or explicit deconfounding \citep{robins2000,kunzel2019}.

\section{Conclusion}
\label{sec:concl}

We formulated in-flight campaign measurement as exposure-conditioned
patient-state trajectory modeling and built a compact patient world model
for it. From four weeks of observation, the model calls the campaign's
final volume on the rare business endpoint within 2.9\%, at the sampling
floor of the test set. This is a task on which standard practice is off
by orders of magnitude. The mechanism is understood rather than guessed.
A Fisher-information argument locates the value of dense transition
supervision in the rare-event regime, and a one-flag ablation confirms
both the location and the size of the effect. The same recipe, a
survival-coherent rollout from a learned intervention-conditioned state,
applies wherever longitudinal exposures meet sparse terminal outcomes.
Adherence programs, vaccination campaigns, and public-health
communication are natural next targets.

\section*{Responsible use and limitations}

This work forecasts observed health-seeking behavior under recorded
digital exposure. It does not estimate causal effects, and
Section~\ref{ssec:guardrail} quantifies why its scenario outputs must not
be read as such. Model outputs are decision support for campaign
measurement and planning. They are not used, and should not be used, for
individual-level clinical or coverage decisions. Uncertainty is reported
throughout against the sampling floor of Proposition~\ref{prop:conc}.
Forecasts at cutoffs whose floor exceeds the model differences are
presented as unrankable rather than interpreted, and a known late-cutoff
underprediction bias of the recurrent models is documented in
Appendix~\ref{app:fullresults}. All results come from one campaign, one
therapeutic area, and one country, on an intentionally enriched cohort
that is reweighted to natural prevalence at evaluation. External
validity is untested. The trajectory evaluations condition on recorded
future exposures, applied equally to every model. An operational
closed-loop variant is under evaluation
(Section~\ref{ssec:operational}). The underlying data are protected
health information analyzed inside a certified environment. Only
aggregate metrics are reported, and no patient-level output leaves that
environment. Extended limitations and deployment guidance are in
Appendices~\ref{app:fullresults} and~\ref{app:practice}.

\paragraph{Reproducibility and data availability.} The underlying data
cannot be shared. Cohort construction rules, feature definitions,
hyperparameters, and the full as-run result tables are given in
Appendices~\ref{app:cohort}--\ref{app:practice}, at a level of detail
sufficient to rebuild the pipeline on comparable data.

\bibliographystyle{plainnat}
\bibliography{refs}

\appendix

\section{Cohort construction}
\label{app:cohort}

The modeling universe contains 10.88M patients with campaign eligibility.
The stratified case-control cohort keeps: all rx converters (5{,}225), all
on-market comparator patients (4{,}399), a deterministic-hash subsample of
dx converters (44{,}917), and a deterministic-hash subsample of on-market
non-converting negatives with at least eight observed weeks (95{,}061).
The total is 147{,}173 patients. Deterministic hashing (no random seed)
makes the cohort exactly reproducible from the universe. Each patient
carries $w_i = 1/(\text{stratum sampling fraction})$, equal to 1 for the
kept-all strata. Evaluation-time inverse-probability weighting with $w_i$
restores natural prevalence \citep{horvitz1952}.

Calendar weeks are indexed as $\mathrm{wk} =
\lfloor(\text{date} - \text{2024-01-01})/7\rfloor$. The relative week is
$r = \mathrm{wk} - \mathrm{wk}_{\text{index}}$. The observation horizon is
$r \in [0, 52]$ with a fixed absolute window end, so each patient's
observed relative window is $\min(52,\; 53 - \mathrm{wk}_{\text{index}})$.
Late enrollers have shorter windows, handled as right-censoring in the
survival likelihood. After a conversion event the patient exits the risk
set. Splits are 70/15/15 by patient hash, and no patient appears in more
than one split. The dynamics-rich subgroup used in
Appendix~\ref{app:fullresults} is defined as patients with at least five
total active exposure weeks (29{,}859 patients, 20.3\%). The originally
intended definition of eight or more active weeks is nearly empty under
this cohort's sparsity. It was widened before any model comparison was
made on the subgroup.

\section{Feature definitions}
\label{app:features}

\paragraph{Per-week action vector ($d_a = 17$).} \texttt{imp\_log}
($\log(1+x)$ of the p99-winsorized total weekly impressions),
$\log(1+\cdot)$ of the weekly event count, $\log(1+\cdot)$ impression
volumes in three channel buckets (digital, programmatic, display), three
ad-type buckets (display, video, audio), seven targeting buckets
(condition, lookalike, behavioral-contextual, conquesting, retargeting,
geo-demographic, none/other), a week-active flag, and the normalized
relative week $r/52$.

\paragraph{Static vector ($d_s = 7$).} Source-of-business one-hot (three
levels: market-naive, switched from another product, continuing on brand),
rx-eligibility flag, dx-eligibility flag, $\log(1+\cdot)$ prior
office-visit count, and normalized enrollment week.

\paragraph{Survival convention.} The hazard emitted from state $\bz_{i,r}$
predicts an event in week $r{+}1$. At-risk person-weeks are pooled for
training and evaluation. Patients are censored at their observation end.

\section{Training details}
\label{app:training}

The proposed model is implemented in PyTorch and trained on CPU. The tuned
configuration (selected on validation hazard likelihood) uses two GRU
layers, hidden size 128, dropout 0.2. The base configuration is one
layer, hidden size 64. The one-step task (Table~\ref{tab:exp1}) uses the
base configuration. The rollout and simulation experiments use the tuned
configuration. The static projection $W_0$ maps the static vector to the
initial hidden state of every layer. The hazard head is $\mathrm{hidden}
\to \mathrm{hidden} \to 1$ with ReLU and dropout. The transition head
maps $(\bz_{i,r}, \ba_{i,r})$ through one hidden layer to the
17-dimensional next-week action summary. The loss weight is $\lambda =
0.3$. Optimization: Adam, learning rate $10^{-3}$,
\texttt{ReduceLROnPlateau} (factor 0.5, patience 2) on validation hazard
negative log-likelihood, early stopping (patience 10), at most 50 epochs,
batch size 512. A control run at batch size 1024 produced materially
identical results, so batch size is not a sensitive choice here. A
multi-step rollout training objective (supervising the cumulative
incidence along the rollout rather than only one-step hazards) was tried
and made calibration worse. It is not used anywhere in this paper. The
GRU forecaster ablation sets $\lambda = 0$ and removes the transition
head, with nothing else changed. On the rollout task the transition head
reaches $+11.5\%$ held-out skill over the training-mean predictor, on
both outcomes. Baselines use standard library defaults (scikit-learn for
logistic regression and MLP, LightGBM \citep{ke2017} for all GBM
variants) and are reported as-run.

\section{Full as-run results on the diagnosis outcome and subgroups}
\label{app:fullresults}

This appendix reports everything the main text compresses: the one-step
foundation task, the full dx trajectory tables, and the dynamics-rich
subgroup, all as-run.

\subsection{One-step hazard prediction (T1)}

\begin{table}[ht]
\caption{One-step hazard prediction (T1), dx outcome, full test split
(22{,}086 patients, 768{,}431 at-risk person-weeks, 6{,}856 positives).
AUROC is reported on the enriched cohort and IPW-corrected to natural
prevalence. AUPRC is on the enriched cohort.}
\label{tab:exp1}
\centering
\small
\setlength{\tabcolsep}{4.5pt}
\begin{tabular}{lccccc}
\toprule
Model & AUROC (enr.) & AUROC (IPW) & AUPRC (enr.) & Cal.\ slope & ECE \\
\midrule
Base rate & 0.500 & 0.500 & 0.009 & --- & --- \\
Logistic regression & 0.855 & 0.838 & 0.051 & \textbf{1.005} & 0.0011 \\
LightGBM (flat) & 0.872 & 0.855 & 0.059 & 0.648 & 0.0006 \\
MLP & 0.873 & 0.856 & 0.065 & 0.929 & 0.0009 \\
GRU world model (proposed) & \textbf{0.880} & \textbf{0.864} &
\textbf{0.067} & 0.957 & \textbf{0.0002} \\
\bottomrule
\end{tabular}
\end{table}

On the one-step task the world model is best on every metric, but the
margins over LightGBM and the MLP are small, as expected. With a median
of three active weeks per patient, cumulative features already capture
most of the history, and the one-step task does not exercise the rollout.
On the rx outcome (748 test positives) all models land at AUROC
0.84--0.86, within noise of one another. The one-step likelihood cannot
separate architectures there, consistent with Section~\ref{ssec:fisher}.

\subsection{Trajectory forecasting on dx (T2)}

\begin{table}[ht]
\caption{Campaign-level trajectory forecasting (T2), dx outcome, test
split, full cohort. Volume relative error (\%) at the campaign end and
coherence. Test risk sets: 20{,}396 down to 12{,}805. $n_+$ = 5{,}874
down to 631.}
\label{tab:dxvol}
\centering
\small
\setlength{\tabcolsep}{4.5pt}
\begin{tabular}{lrrrrrc}
\toprule
& \multicolumn{5}{c}{Volume rel.\ error (\%) at cutoff $\tau$} & \\
\cmidrule(lr){2-6}
Model & $\tau{=}4$ & $\tau{=}8$ & $\tau{=}13$ & $\tau{=}26$ & $\tau{=}39$
& Coherent \\
\midrule
Per-$(\tau,H)$ GBM & 209.3 & 256.0 & 326.7 & 645.1 & 1572.8 & 54--70\% \\
Per-$(\tau,H)$ GBM + future & 209.2 & 255.8 & 326.6 & 644.9 & 1572.9 &
27--72\% \\
Naive extrapolation & 97.3 & 96.8 & 101.9 & 120.3 & 140.5 & 100\% \\
Pooled-hazard GBM & 8.7 & 9.2 & 8.8 & 6.0 & \textbf{6.2} & 100\% \\
GRU forecaster ($\lambda{=}0$) & \textbf{6.6} & \textbf{5.4} &
\textbf{3.4} & 5.7 & 15.5 & 100\% \\
GRU world model (proposed) & 7.5 & 6.5 & 4.8 & \textbf{4.5} & 15.3
& 100\% \\
\bottomrule
\end{tabular}
\end{table}

\begin{table}[ht]
\caption{Patient-level discrimination and trajectory calibration on dx
(T2), full cohort. AUROC is IPW-corrected. ICI is the mean absolute gap
between predicted and KM-observed cumulative-incidence curves.}
\label{tab:dxauroc}
\centering
\small
\setlength{\tabcolsep}{3.5pt}
\begin{tabular}{lcccccccccc}
\toprule
& \multicolumn{5}{c}{AUROC (IPW)} & \multicolumn{5}{c}{Trajectory ICI} \\
\cmidrule(lr){2-6} \cmidrule(lr){7-11}
Model & 4 & 8 & 13 & 26 & 39 & 4 & 8 & 13 & 26 & 39 \\
\midrule
Per-$(\tau,H)$ GBM & .718 & .675 & .695 & .664 & .544 &
.139 & .160 & .201 & .326 & .533 \\
Naive extrapolation & .500 & .500 & .500 & .500 & .500 &
.149 & .126 & .114 & .083 & .050 \\
Pooled-hazard GBM & .912 & .908 & .903 & .896 & .891 &
.012 & .012 & .010 & .005 & \textbf{.004} \\
GRU forecaster ($\lambda{=}0$) & .935 & .930 & .924 & .917 &
.904 & \textbf{.010} & \textbf{.008} & \textbf{.006} & \textbf{.002} &
\textbf{.004} \\
GRU world model (proposed) & \textbf{.935} & \textbf{.930} &
\textbf{.925} & \textbf{.917} & \textbf{.909} & .012 & .011 & .008 &
\textbf{.002} & \textbf{.004} \\
\bottomrule
\end{tabular}
\end{table}

\begin{figure}[ht]
\centering
\includegraphics[width=0.62\textwidth]{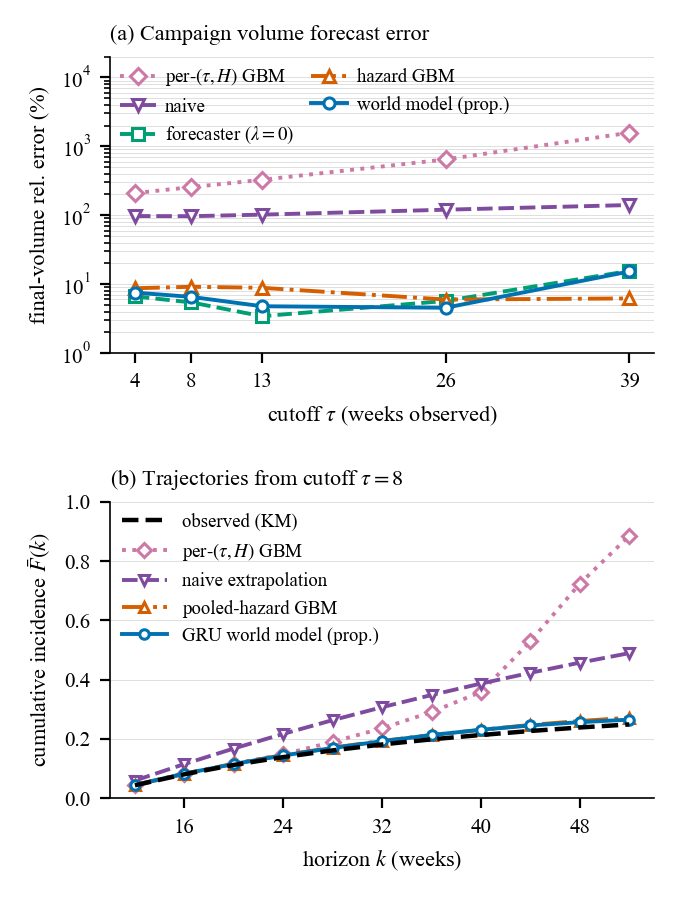}
\caption{Trajectory forecasting on dx (test split). (a) Final-volume
relative error versus cutoff (log scale). (b) Mean predicted
cumulative-incidence trajectories from $\tau = 8$ against the
Kaplan--Meier curve. The coherent models sit in the same single-digit
band. The per-cell practice explodes with the horizon.}
\label{fig:volume}
\end{figure}

Tables~\ref{tab:dxvol}--\ref{tab:dxauroc} and Fig.~\ref{fig:volume} show
the dx outcome in full. The reading is the one predicted in
Section~\ref{ssec:predictions}. The structural gap against the per-cell
practice is enormous (P1). The three coherent models are within a few
percent of each other on volume and ICI, near the $\sim$1\% sampling
floor of Proposition~\ref{prop:conc} (P2). The world model's
patient-level edge over the pooled-hazard GBM is a stable $+0.018$ to
$+0.023$ AUROC at every cutoff.

\paragraph{The late-cutoff underprediction.} At $\tau = 39$ both GRU
variants underpredict the dx volume by about 15\% while the GBM baseline
stays at 6\%. With $n_+ = 631$ the $1\sigma$ floor is only 4\%, so this
is a systematic effect of about $4\sigma$. Both values of $\lambda$ are
hit identically, so it is a property of the recurrent architecture, not
of the transition loss. Two mechanisms plausibly combine. First,
training support thins at late relative weeks. The observation window is
$\min(52, 53 - \mathrm{wk}_{\text{index}})$, so weeks $r > 39$ are
observed only for the earliest enrollees, and the hazard head sees few
at-risk examples there. Second, by week 39 most at-risk patients have
been exposure-inactive for many consecutive weeks. Under a long constant
input the recurrent state drifts toward a fixed point. Inactivity also
correlates with having already converted in the training data, the same
selection mechanism in mild form that produces the $\alpha = 0$ artifact
of Section~\ref{ssec:guardrail}, so the drifted state carries a
suppressed hazard. The GBM baseline is immune to both, because its
cumulative features simply freeze. The practical reading is that the
world model's advantage is a property of early cutoffs, which is where
early forecasting lives. A deployment would hand very late cutoffs to
the cheaper pooled-hazard model or recalibrate the tail.

\subsection{Dynamics-rich subgroup}

\begin{table}[ht]
\caption{The regime map (volume rel.\ error, \%): the three coherent
models on the full dx cohort, the dynamics-rich dx subgroup ($\ge 5$
total active weeks), and the full rx cohort. Small-$n_+$ cells are noisy.}
\label{tab:regimes}
\centering
\small
\setlength{\tabcolsep}{3.6pt}
\begin{tabular}{llrrrrr}
\toprule
Regime & Model & $\tau{=}4$ & $\tau{=}8$ & $\tau{=}13$ & $\tau{=}26$ &
$\tau{=}39$ \\
\midrule
dx, full cohort
& Pooled-hazard GBM & 8.7 & 9.2 & 8.8 & 6.0 & \textbf{6.2} \\
($n_+$: 5874--631)
& GRU forecaster & \textbf{6.6} & \textbf{5.4} & \textbf{3.4} & 5.7 & 15.5 \\
& GRU world model & 7.5 & 6.5 & 4.8 & \textbf{4.5} & 15.3 \\
\midrule
dx, dynamics-rich
& Pooled-hazard GBM & 15.0 & 15.2 & 19.5 & 21.5 & 36.1 \\
($n_+$: 599--91)
& GRU forecaster & \textbf{2.3} & \textbf{0.7} & \textbf{0.3} & 7.8 &
\textbf{16.9} \\
& GRU world model & 4.0 & 1.6 & 3.2 & \textbf{6.8} & 17.6 \\
\midrule
rx, full cohort
& Pooled-hazard GBM & 13.6 & 17.3 & 19.2 & 33.1 & 44.3 \\
($n_+$: 711--76)
& GRU forecaster & 5.2 & 7.4 & 8.2 & 11.1 & \textbf{5.6} \\
& GRU world model & \textbf{2.9} & \textbf{2.3} & \textbf{2.6} &
\textbf{0.8} & 23.4 \\
\bottomrule
\end{tabular}
\end{table}

\begin{figure}[ht]
\centering
\includegraphics[width=\textwidth]{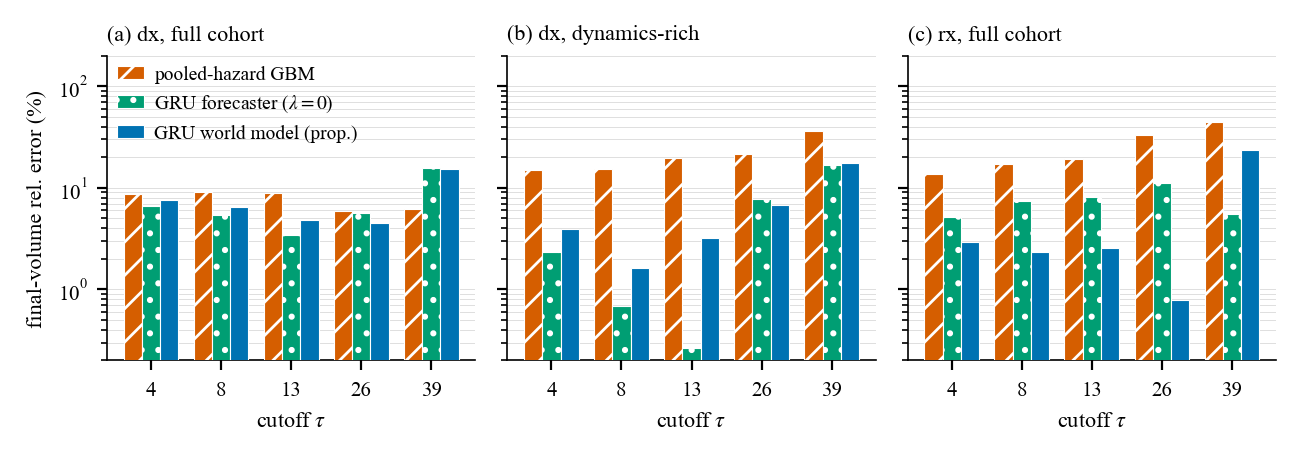}
\caption{The regime map (Table~\ref{tab:regimes}, drawn): final-volume
relative error of the three coherent models, log scale. (a) Full sparse
dx cohort: the aggregate ceiling of P2. (b) Dynamics-rich patients and
(c) the rare rx outcome: the regime dependence of P3.}
\label{fig:regimes}
\end{figure}

On patients with at least five total active weeks (20.3\% of the cohort),
the pooled-hazard GBM degrades to 15--36\% volume error while the GRU
models stay at 0.3--18\%, a $2$--$10\times$ gap, with trajectory ICI
2--3$\times$ lower and about $+0.02$ AUROC on the same patients
(Table~\ref{tab:regimes}, Fig.~\ref{fig:regimes}). Where there is a
sequence worth reading, reading it beats summarizing it. Between the two
GRU models the subgroup comparison is mixed and within noise, consistent
with the main-text finding that transition supervision separates them
only in the rare-event regime.

\section{Scenario-simulation details (T3)}
\label{app:exp3}

The simulator is the tuned world model (45 epochs, otherwise as in
Appendix~\ref{app:training}), applied to the full test risk sets at $\tau
\in \{4, 8\}$ on the dx outcome. The perturbed pooled-hazard GBM serves
as the reference simulator. Dose scaling multiplies all future exposure
magnitudes by $\alpha$. At $\alpha = 0$ this also zeroes the week-active
flag, which is what triggers the selection artifact. Per-lever shutoffs
zero one action dimension's future values (13 levers: channel, ad-type,
and targeting buckets). The largest observed shift was $+0.0026$ (total
digital volume off), and the placebo lever (a negligible bucket scaled
$4\times$) moved the forecast by less than $10^{-4}$. Per-patient
dose-response monotonicity over all $\alpha$ including 0 holds for
20--29\% of patients, dragged down by the $\alpha = 0$ artifact. Two
protocol refinements are noted for future runs: hold the week-active
pattern fixed at $\alpha = 0$ so that only magnitude varies, and restrict
reported curves to the observed dose support (approximately $\alpha \in
[0.25, 2]$).

\section{Proofs}
\label{app:proofs}

\begin{proof}[Proof of Proposition~\ref{prop:errprop}]
Part (i) is immediate, because every factor of the product in
(\ref{eq:rollout}) lies in $[0,1]$. For part (ii), write the survival
functions $\hat S(k) = \prod_{m=\tau+1}^{k}(1-\hat h_{i,m})$ and $S(k) =
\prod_{m=\tau+1}^{k}(1-h_{i,m})$, and set $P^{-}_m =
\prod_{j=\tau+1}^{m-1}(1-\hat h_{i,j})$ and $P^{+}_m =
\prod_{j=m+1}^{k}(1-h_{i,j})$. Telescoping,
\begin{equation*}
\hat S(k) - S(k) \;=\; \sum_{m=\tau+1}^{k}
P^{-}_m \big(h_{i,m} - \hat h_{i,m}\big) P^{+}_m .
\end{equation*}
Every $P^{-}_m, P^{+}_m$ lies in $[0,1]$, so the triangle inequality gives
(\ref{eq:errbound}), and $|\hat F - F| = |\hat S - S|$.
\end{proof}

\begin{proof}[Proof of Proposition~\ref{prop:conc}]
Unbiasedness follows from assumption (ii) and linearity. The variables
$w_i (\mathbf{1}\{\tau < T_i \le R\} - q_i)$ are conditionally independent
by assumption (i), zero-mean, bounded by $w_{\max}$, with variance $w_i^2
q_i (1-q_i)$. Inequality (\ref{eq:bernstein}) is Bernstein's inequality
applied to their sum.
\end{proof}

\section{Practical guidance}
\label{app:practice}

The findings compress into three pieces of advice for teams that measure
campaigns in flight.

\textbf{1. For the rare endpoint that matters, the world model is not an
increment. It is the difference between having a number and not.} A
week-4 NBRx volume call within 3\% did not previously exist at any price.
Common practice is off by orders of magnitude, and even a well-built
coherent GBM is off by 14--33\%.

\textbf{2. For common outcomes, fix the formulation and stop.} Most of
the gap between practice and the world model on the diagnosis outcome is
closed by the survival rollout alone, which any pooled-hazard model can
wear, including the GBM a team already has. The Fisher calculation of
Section~\ref{ssec:fisher} tells a team in advance which of their outcomes
justifies the sequence model.

\textbf{3. Simulate for planning, not for effect estimation.} The
simulator answers what the model expects under a stated plan, coherently
and per patient. On targeting-driven data its off-support scenarios
measure selection rather than response, and the 0.31-to-0.89 inversion is
the concrete warning. Scenario outputs need support flags and negative
controls. Effect claims need experimental design.

\end{document}